\documentclass[10pt, conference, letterpaper]{IEEEtran}

\usepackage[T1]{fontenc}
\ifCLASSINFOpdf
\else
\fi
\usepackage[cmintegrals]{newtxmath}
\usepackage{url}
\usepackage{cite}
\usepackage{amsmath}

\usepackage{amsthm}
\usepackage{mathtools}
\usepackage{algorithmic}
\usepackage[ruled,linesnumbered]{algorithm2e}
\usepackage{booktabs}
\usepackage{multirow}
\usepackage{graphicx}
\usepackage{textcomp}
\usepackage{xcolor}
\usepackage{bm}
\usepackage{enumitem}
\usepackage{subfigure}
\usepackage{url}
\usepackage{hyperref}
\usepackage[normalem]{ulem}  

\newtheorem{assumption}{Assumption}

\newtheorem{theorem}{Theorem}

\newcommand{\eg}{\textit{e}.\textit{g}.}

\newcommand{\bfa}{\mathbf{a}}
\newcommand{\bfs}{\mathbf{s}}

\newcommand{\cmmnt}[1]{}
\newcommand{\tr}[1]{}

\newcommand{\transitions}{T}
\newcommand{\bellman}{\mathcal{B}}

\allowdisplaybreaks[4]

\newboolean{showcomments}
\setboolean{showcomments}{true}
\newcommand{\comment}[1]{\ifthenelse{\boolean{showcomments}}
	{\textcolor{red}{(Comment: #1)}}
	{}
}
\newcommand{\red}[1]{\ifthenelse{\boolean{showcomments}}
	{\textcolor{red}{#1}}
	{}
}

\newboolean{showanswers}
\setboolean{showanswers}{true}
\newcommand{\answer}[1]{\ifthenelse{\boolean{showanswers}}
	{\textcolor{blue}{(Answer: #1)}}
	{}
}

\SetKwRepeat{Do}{do}{while}

\IEEEoverridecommandlockouts

\begin{document}

\title{FORLER: Federated Offline Reinforcement Learning with Q-Ensemble and Actor Rectification\vspace{-0pt}}

\author{
\IEEEauthorblockN{
Nan Qiao$^{*}$,
Sheng Yue$^{\dagger\ddagger}$
}
\IEEEauthorblockA{
$^{*}$Central South University, School of Computer Science, Changsha, China\\
$^{\dagger}$Sun Yat-sen University, School of Cyber Science and Technology, Shenzhen, China\\
Email: nan.qiao@csu.edu.cn, yuesh5@mail.sysu.edu.cn\\
$^{\ddagger}$Corresponding Author
}
}


\maketitle

\vspace{-5cm}
\begin{abstract}
In Internet-of-Things systems, federated learning has advanced online reinforcement learning (RL) by enabling parallel policy training without sharing raw data. However, interacting with real environments online can be risky and costly, motivating offline federated RL (FRL), where local devices learn from fixed datasets. Despite its promise, offline FRL may break down under low-quality, heterogeneous data. Offline RL tends to get stuck in local optima, and in FRL one device’s suboptimal policy can degrade the aggregated model, i.e., policy pollution.
We present FORLER, combining Q-ensemble aggregation on the server with actor rectification on devices. The server robustly merges device Q-functions to curb policy pollution and shift heavy computation off resource-constrained hardware without compromising privacy. Locally, actor rectification enriches policy gradients via a zeroth-order search for high-Q actions plus a bespoke regularizer that nudges the policy toward them.
A $\delta$-periodic strategy further reduces local computation. We theoretically provide safe policy improvement performance guarantees. Extensive experiments show FORLER consistently outperforms strong baselines under varying data quality and heterogeneity.

\end{abstract}
\begin{IEEEkeywords}
    Edge Computing, Federated Learning, Offline Reinforcement Learning
\end{IEEEkeywords}

\section{Introduction}
\label{sec:introduction}
Sequential decision-making is a fundamental capability underpinning Internet of Things applications, from autonomous driving to multi-robot control \cite{zhang2025carplanner,qiao2023popec,Wang2025Squeezer,Zhang2023Distributed}.
Recent progress has been driven by deep reinforcement learning \cite{wang2023uav}, with a major shift from single- to multi-agent DRL that improves exploration \cite{seid2021multi}. A persistent challenge, however, is the need to transmit substantial data to a central controller, stressing bandwidth and risking privacy \cite{qi2021federated}. This has motivated federated reinforcement learning (FRL) \cite{jin2022federated}.
In FRL, local devices interact with the environment online, update policies locally, and periodically aggregate them on a server \cite{fang2025provably}. However, such online interaction is often impractical, costly, or unsafe in real systems. For example, training autonomous-driving policies on public roads poses substantial safety risks \cite{levine2020offline}. Offline FRL mitigates these issues by learning from fixed local datasets with offline RL methods \cite{qiao2025fova,rengarajan2023federated,fujimoto2021minimalist,qiao2025fova,an2021uncertainty,qiao2026less},
Yet, it remains challenging at both the local update and global aggregation stages.



At the local update stage, the value function guides policy improvement, but the objective is non-concave, so policy-gradient methods often get stuck in local optima, especially with low-quality data \cite{pan2022plan}. To stabilize offline training, many works add behavior cloning \cite{kim2025penalizing,fujimoto2021minimalist}, which implicitly assumes behavior actions are near-optimal \cite{rengarajan2023federated}.
When data quality is uneven, coverage is limited, or the BC weight is large, behavior cloning ties the policy to the behavior policy, dampens Q-driven improvement, and, under federated aggregation, increases the risk of suboptimality and pollution.

Furthermore, current global aggregation methods have notable limitations, as our analysis shows.
As depicted in Fig.~\ref{fig:motivating}, the performance of the advanced method degrades markedly when devices with low-quality datasets join training. 
These devices learn policies that are locally optimal to their data but {pollute} the global policy during aggregation. We term this \textit{policy pollution}.
Moreover, many offline FRL methods deliberately push more computation onto devices to preserve privacy and reduce uplink traffic \cite{rengarajan2023federated,qiao2025fova}. This design burdens resource-constrained IoT hardware, increasing latency and energy costs and worsening instability when compute capabilities are heterogeneous.
The polluted global policy then harms other devices, degrading otherwise strong local policies.



To address these issues, we propose federated offline reinforcement learning with Q-ensemble and actor rectification (FORLER), featuring two components: {Q-ensemble aggregation} and {actor rectification}. During server-side aggregation, we collect Q-parameters from all devices and run a new offline RL algorithm based on Q-ensembles, which both improves offline RL itself and robustly aggregates cross-device information to produce updated Q-functions and policies. On devices, actor rectification exploits conservative value functions by augmenting standard policy gradients with a zeroth-order search for high-Q actions; a tailored regularizer then pulls the policy toward these actions. We also adopt a ``$\delta$-periodic'' strategy that samples only every $\tau$ updates to reduce local computation.

In a nutshell, our main contributions are summarized below.
\begin{itemize}
    \item We design an ensemble-based offline RL algorithm for server-side aggregation that mitigates {policy pollution} and better accommodates statistical heterogeneity.
    \item We introduce actor rectification for local training, alleviating local optima.
    The ``$\delta$-periodic'' strategy further improves computational efficiency.
    \item We prove that FORLER enjoys the safe policy improvement guarantee and demonstrate consistent gains over prior methods through extensive experiments.
\end{itemize}

\section{Preliminaries}\label{sec:preliminaries}
This section reviews federated learning and offline reinforcement learning, and then presents the system model.

\paragraph{FL and RL}
Federated learning trains models locally on devices and aggregates their updates on a central server without sharing raw data. In FedAvg, the global objective is
\(
F(w)=\sum_{k=1}^{\mathcal{K}} p_k\, F_k(w),
\)
with dataset-size weights
$
p_k = \frac{|\mathcal{D}_k|}{|\mathcal{D}|},
|\mathcal{D}| = \sum_{k=1}^{\mathcal{K}} |\mathcal{D}_k|.
$
However, size-based weighting may misrepresent data quality, and simple linear aggregation can perform poorly under heterogeneous data and system conditions.
Reinforcement learning can be formalized as an MDP $\mathcal{M}=(\mathcal{S},\mathcal{A},P,R,\mu_0,\gamma)$ with goal $\max_{\pi}\ \mathbb{E}\left[\sum_{t=0}^{\infty}\gamma^t r(s_t,a_t)\right]$. Q-learning estimates $Q^*(\bfs,\bfa)$; actor–critic methods update a policy (actor network) using a value estimator (critic network) by minimizing the Bellman error $\left(Q(\bfs,\bfa)-\mathcal{B}^{\pi}Q(\bfs,\bfa)\right)^2$ with
$\mathcal{B}^{\pi}Q(\mathbf{s},\mathbf{a})=\mathbb{E}_{\bfs'\sim T(\cdot|\bfs,\bfa)}\left[r(\mathbf{s},\mathbf{a})+\gamma\ \mathbb{E}_{\mathbf{a'}\sim\pi(\cdot|\mathbf{s'})} Q_{\phi}(\mathbf{s'},\mathbf{a'})\right]$.
In offline RL, learning minimizes this error over a fixed dataset, while the policy maximizes the critic’s predicted return.
TD3BC encourages staying on the data manifold \cite{fujimoto2021minimalist}:
\begin{equation}
\label{eq:td3+bc}
    \pi_{td3bc} \leftarrow \arg\max_{\pi}\ \mathbb{E}_{(\bfs,\bfa)\sim\mathcal{D}}\left[\lambda Q(\bfs,\pi(\bfs))-(\pi(\bfs)-\bfa)^2\right].
\end{equation}
where $\lambda$ trades off value maximization and behavior cloning.
Alternatively, CQL learns conservative value estimates \cite{cql}:
\begin{align}
Q^{t+1}_\mathit{cql} &\leftarrow \arg \min _{Q}\ \mathbb{E}_{\bfs,\bfa,s^{\prime}\sim\mathcal{D}}\left[\big(Q(\bfs,\bfa)-\hat{\mathcal{B}}^{\pi} Q^{t}(\bfs,\bfa)\big)^{2}\right] \nonumber\\
&+\ \omega\left(\mathbb{E}_{s\sim\mathcal{D},\ a\sim\mu(\cdot|\bfs)}[Q(\bfs,\bfa)]-\mathbb{E}_{\bfs,\bfa\sim\mathcal{D}}[Q(\bfs,\bfa)]\right),
\label{eq:cql}
\end{align}
where $\hat{\mathcal{B}}^{\pi} Q = r(\mathbf{s},\mathbf{a})+\gamma\ \mathbb{E}_{\mathbf{a'}\sim\pi(\cdot|\mathbf{s'})}\left[\min_{n=1,2} Q_{n}(\mathbf{s'},\mathbf{a'})\right]$ and $\omega$ is a positive coefficient. Such regularization curbs overestimation yet can be sensitive to the number of critics.

\paragraph{Deployment and dataset assumptions.}
We illustrate the system model in Fig.~\ref{fig:system-model}.
We consider an IoT/edge system with $\mathcal{K}$ devices coordinated by a central server.
Each device $k$ maintains a fixed offline dataset $\mathcal{D}_k$ collected from historical interactions under a behavior policy $\pi_{\beta_k}$, while raw transitions never leave the device.
Concretely, device $k$ stores
$\mathcal{D}_k=\{(\bfs_j,\bfa_j,r_j,\bfs'_j)\}_{j=1}^{|\mathcal{D}_k|}$,
and the server stores an auxiliary offline dataset
$\mathcal{D}_o=\{(\bfs_j,\bfa_j,r_j,\bfs'_j)\}_{j=1}^{|\mathcal{D}_o|}$,
which can be a public/proxy dataset or non-sensitive logs owned by the operator and is used only for server-side offline updates.
Training proceeds in synchronous communication rounds:
the server broadcasts the current global parameters $(Q_o^t,\pi_o^t)$,
each device performs $\Gamma$ local offline updates with actor-side rectification based on $\mathcal{D}_k$,
uploads only model parameters (e.g., critic parameters) to the server,
and the server performs offline RL with a Q-ensemble on $\mathcal{D}_o$ to update global parameters and then broadcasts the updated models back to devices.
\begin{figure}[htbp]
    \vspace{-0cm} 
    \setlength{\abovecaptionskip}{-0cm} 
    \setlength{\belowcaptionskip}{-0cm} 
    \centering
	\includegraphics[width=0.345\textwidth]{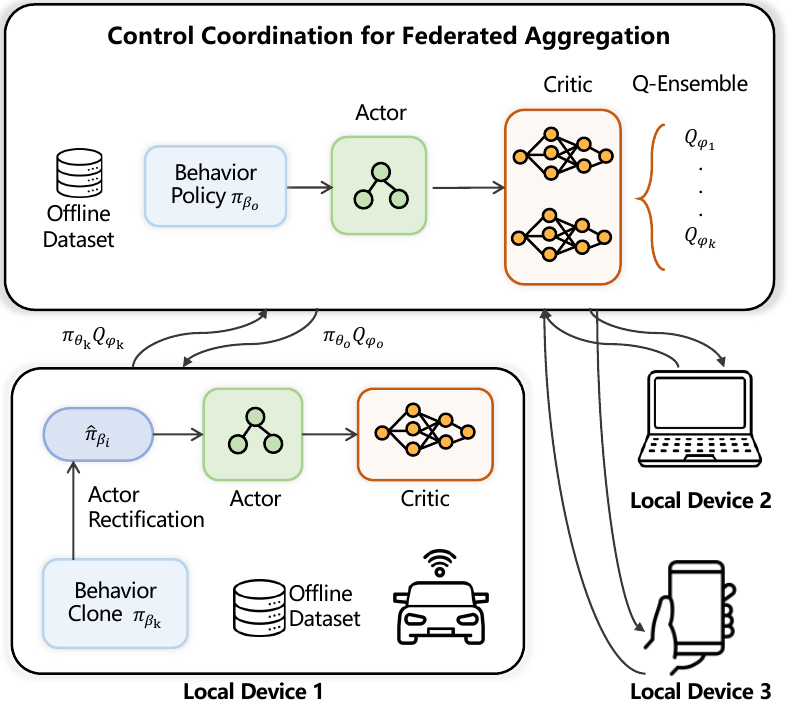}
    \vspace{-0pt}
	\caption{The architecture of FORLER.  
    The system integrates a suite of devices coordinated by a server. Both the individual devices and the server are equipped with offline datasets
    and compute independently.
    }
	\label{fig:system-model}
    \vspace{-10pt}
\end{figure}

\begin{figure}[htbp]
    \centering
    \hspace{-15pt}
    \subfigure{
                \begin{minipage}[htbp]{0.555\columnwidth}
                    \centering	
                    \label{fig:c1}
                    \includegraphics[width=\textwidth]{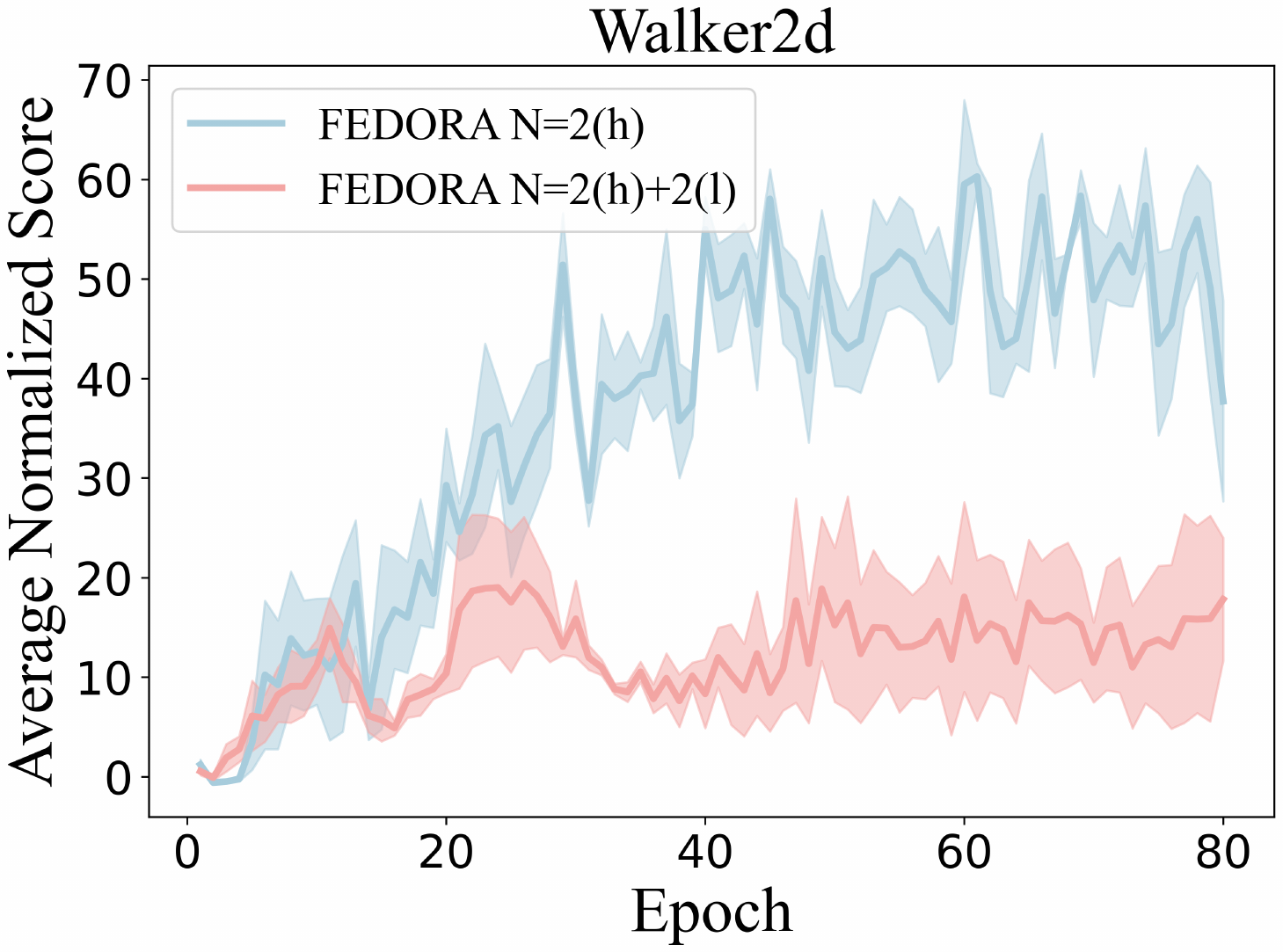}
              	\end{minipage}
        }
        \hspace{-0.02\textwidth}
        \subfigure{
                \begin{minipage}[htbp]{0.400\columnwidth}
                    \centering	
                    \label{fig:c3}
                    \includegraphics[width=\textwidth]{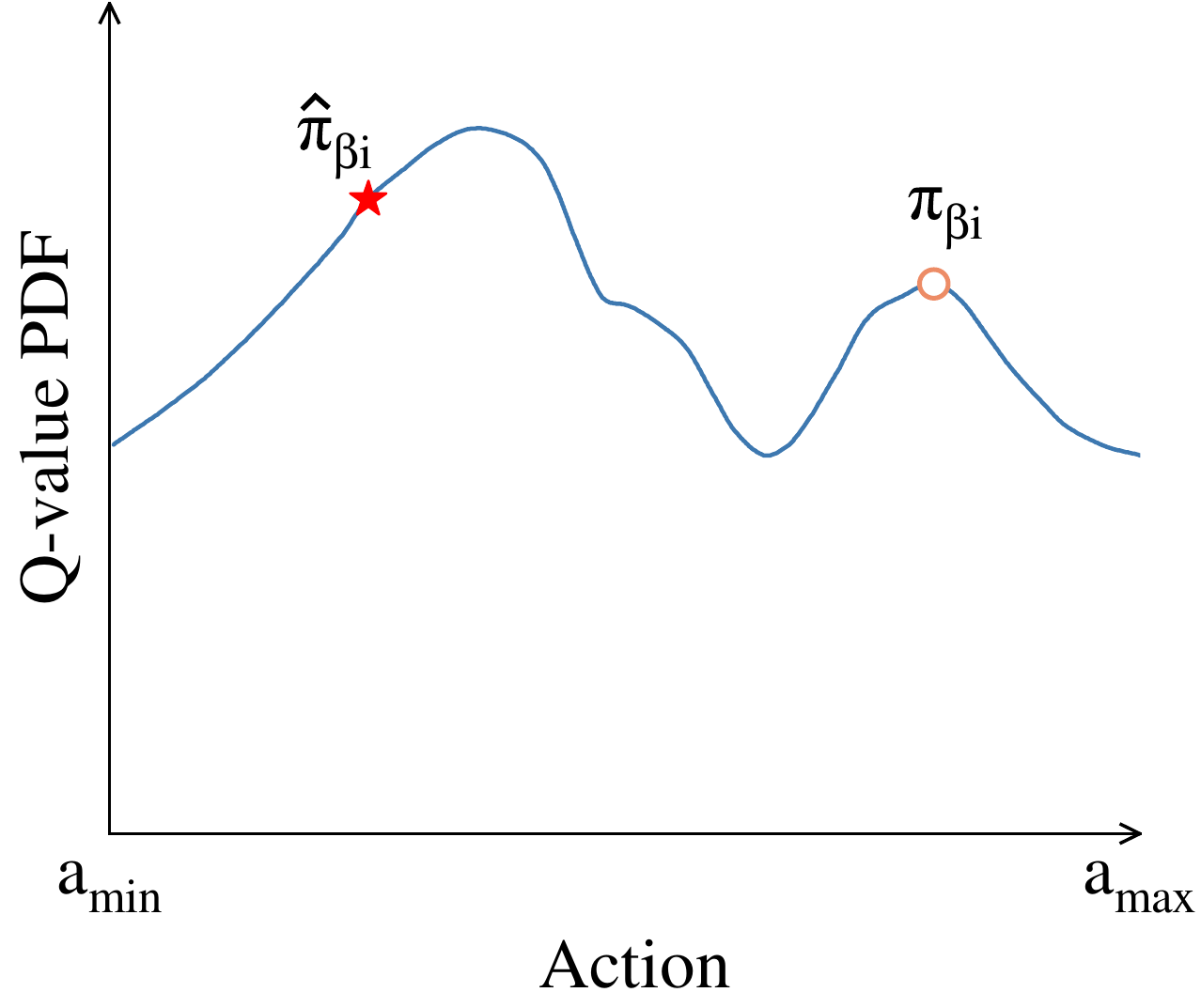}
              	\end{minipage}
        }
    \vspace{-5pt}

	\caption{The performance of advanced algorithms, \eg~FEDORA.
 (Left) Overall performance difference between high-quality datasets and mixed (high-quality + additional low-quality) datasets.
 (Right) A visual representation of the Q-value vs. Action.  The circle symbolizes the predicted action derived from the updating policy of FEDORA, while the pentacle represents the corresponding action predicted by the updating policy of our method.
 }
\label{fig:motivating}
\vspace{-15pt}
\end{figure}

\section{Proposed Method}\label{sec:proposed_method}
In this section, we first introduce a motivating example to show that advanced algorithms suffer from policy pollution and become trapped in local optima. We then introduce FORLER, which couples Q ensemble aggregation on the server with $\delta$ periodic actor rectification for local updates. This dual design addresses these shortcomings and yields a more robust framework for offline FRL.

\subsection{The Motivating Example}\label{subsec:motivate}

We conducted a series of experiments using the walker2d dataset to assess the impact of data quality on learning performance. Specifically, we utilized two devices, each operating on medium-expert data with a dataset size $|\mathcal{D}_k|=20k$. 
Our findings, illustrated in the left part of Fig.\ref{fig:motivating}, reveal a counter-intuitive result: despite the extra dataset added in the second experiment, its performance was significantly lower compared to the first. 
These results highlight policy pollution and its impact, with the extra low-quality data degrading the learned policy and overall performance.

Additionally, we identify a key limitation of advanced methods: critic-only regularization does not enable devices to learn effective coordination policies. The right panel of Fig.\ref{fig:motivating} shows the Q value landscape at a given training step. The circle denotes the action produced by a representative advanced method under its current update, while the pentacle denotes our approach. The former becomes trapped in a poor local optimum. As a result, devices fail to leverage the conservative value function globally, leading to suboptimal policies.

\subsection{Local Updates with $\delta$-periodic Actor Rectification}

\paragraph{Local policy evaluation}
Obviously, extrapolation error is a tricky problem in local offline policy evaluation.
Because training does not interact with the environment online, the state–action distribution induced by the current policy diverges from the dataset distribution\cite{cql,fujimoto2021minimalist,an2021uncertainty}. 
We therefore adopt CQL to mitigate this shift and curb overestimation on out-of-distribution actions, which is as follows:
\begin{align}
\label{eq:localQ}
Q_{k}^{\tau+1} \leftarrow \arg \min _{Q} {\frac{1}{2} \mathbb{E}_{\bfs, \bfa, s^{\prime} \sim \mathcal{D}_{k}}\left[\left(Q(\bfs, \bfa)-\hat{\mathcal{B}}^{\pi_k} Q_{k}^{k}(\bfs, \bfa)\right)^{2}\right]}\nonumber\\
+{\omega_c\left(\mathbb{E}_{s \sim \mathcal{D}_{k}, a \sim \mu(\cdot \mid s)}[Q(\bfs, \bfa)]-\mathbb{E}_{\bfs, \bfa \sim \mathcal{D}_{k}}[Q(\bfs, \bfa)]\right)},
\end{align}
which is similar with Eq.~\eqref{eq:localQ} and has been fully covered.

\paragraph{Local policy improvement with $\delta$-periodic Actor Rectification}

In light of the challenges in the local optima problem, we adopt the following policy-improvement process:
\begin{align}
\label{eq:localpi}
    \pi_k^{\tau+1} &\leftarrow \arg \max _{\pi} \mathbb{E}_{s \sim \mathcal{D}_{k}, a \sim \pi(\cdot \mid s)}\left[\hat{Q}_{k}(\bfs, \bfa)\right]\nonumber\\
    &-\alpha_1\left(\pi-{\color{blue}\hat{\pi}_{\beta_k}^{\tau}}\right)^{2}-\alpha_2\left(\pi-\pi_0\right)^{2},
\end{align}
where $\hat{\pi}_{\beta_k}^{\tau}$ represents the action proposed by a zeroth-order optimizer, with $\alpha_1$ and $\alpha_2$ acting as positive coefficients. 
In Eq.~\ref{eq:localpi}, the term weighted by $\alpha_1$ pulls the actor toward the rectified action $\hat{\pi}_{\beta_k}$, which is obtained by a derivative-free search for high-$Q$ actions and helps escape local optima under low-quality data. The term weighted by $\alpha_2$ anchors the policy to $\pi_0$ (the received global policy at the beginning of the round) to prevent excessive policy drift when $Q$ estimates are conservative/noisy.
Notably, $\left(\pi-{a}\right)^{2}$ have been utilized in TD3BC and FEDORA \cite{fujimoto2021minimalist,rengarajan2023federated}, where $a$ is the `seen' action in the dataset.
Nevertheless, we replace it with $\hat{\pi}_{\beta_k}^{\tau}$, which is the `optimized' action and makes our method perform well even if the quality of the dataset is low.
Specifically, in the training process, we initialize a Gaussian distribution $\mathcal{N}({x_k}, {y_k})$. Over $I$ iterations, generate a population of $N$ individual actions $\mathcal{\hat{A}}_k=\{\hat{\bfa}_{k}^n\}^{N}_{n=1}$ sampled from the distribution $\mathcal{N}({x_k}, {y_k})$. For each action in the population, it estimates $Q$-values using the critic $Q^{\tau}_k$.
Subsequently, the parameters ${x_k}$ and ${y_k}$ of the Gaussian distribution are updated according to a predefined update equation:
\begin{align}
x_{k}^{i+1}=\sum_{n=1}^{N} w_n\, \hat{\bfa}_{k}^{n},
\quad
y_{k}^{i+1}=\sqrt{\sum_{n=1}^{N} w_n \left\|\hat{\bfa}_{k}^{n}-x_{k}^{i+1}\right\|_2^{2}} .
\label{eq:dist_update}
\end{align}
where $w_n = \frac{\exp(\beta\, Q_k^\tau(s,\hat{\bfa}_{k}^{n}))}{\sum_{m=1}^{N}\exp(\beta\, Q_k^\tau(s,\hat{\bfa}_{k}^{m}))}$.
We select the optimal candidate action $\hat{\pi}^{\tau+1}_{\beta_k}$ based on the highest $Q$-value among the sampled actions and the current policy. 
Nevertheless, the described method poses significant challenges due to its \textit{high computational complexity and extensive sampling} requirements.   These aspects render it less suitable for environments with constrained computing and storage capabilities, particularly for devices. 
To address this limitation, we introduce an alternative approach, which we refer to as \textit{$\delta$-periodic Actor Rectification}.   This new strategy involves computing $\hat{\pi}_{\beta_k}^{\tau}$ at regular intervals, each time $\delta$ elapses.  
Accordingly, $\hat{\pi}_{\beta_k}^{\tau}$ is updated as
\begin{align}
    \hat{\pi}^{\tau+1}_{\beta_k} \doteq \left\{\begin{array}{ll}
    \arg\max_{\hat{\pi}_{\beta_k} \in \mathcal{\hat{A}}_k \cup \pi_k(\bfs)} Q_k^{\tau}(\bfs,\hat{\pi}_{\beta_k}(\bfs)), & \tau \in \Gamma^{\delta}, \\
    \arg\max_{\hat{\pi}_{\beta_k} \in \hat{\pi}_{\beta_k}^{\tau} \cup \pi_k(\bfs)} Q_k^{\tau}(\bfs,\hat{\pi}_{\beta_k}(\bfs)), & \text {otherwise,}
    \end{array}\right.\nonumber
\end{align}
where $\Gamma^{\delta}\doteq\{0,\delta,2\delta,\cdots\}$.
The method avoids computing $\hat{\pi}_{\beta_k}^{\tau}$ at every iteration, thereby simplifying the procedure and substantially reducing computation and sampling overhead.  
Moreover, beyond these practical benefits, we have also established through theoretical analysis that this method enhances the security policy, ensuring a more robust and efficient performance.
The complete pseudo code of \textit{$\delta$-periodic Actor Rectification} is shown in Algorithm \ref{alg:ar}.
The zeroth-order search in actor rectification evaluates the critic on $N$ sampled actions for $I$ iterations per update, leading to $\mathcal{O}(I\cdot N)$ Q-forward evaluations.
With full rectification ($\delta = 1$), this cost is incurred at every local step, whereas the proposed $\delta$-periodic strategy performs the full search only every $\delta$ steps and reuses the rectified action otherwise, reducing the rectification-related evaluations by approximately a factor of $\delta$.
Communication payload per round remains unchanged because the same model parameters are exchanged. The benefit is primarily on-device computation, latency and energy.

\begin{algorithm}[htbp]
    \caption{$\delta$-periodic Actor Rectification}
    \label{alg:ar}
    \LinesNumbered
    \KwIn{Critic $Q^{\tau}_k$, $\bfs$, $\hat{\pi}_{\beta_k}^{\tau}$, and $\tau$ from Algorithm \ref{alg:forler}}

        \eIf{$\tau \in \Gamma^{\delta}$}{
            Initialize $\mathcal{N}({x_k}, {y_k})$\\
            \For{iteration $i=1$ to $I$}{
                Construct $\hat{\mathcal{A}}_k$ by sampling $\hat{\bfa}_{k}^n\sim\mathcal{N}(x_k, y_k)$ for $n=1,\ldots,N$.\\
                Estimate $Q$-values for $K$ individuals in the population $\{Q_{k}^1(\bfs, \hat{\bfa}_{k}^n)\}_{n=1}^N$\\
                Update ${x_k}$ and ${y_k}$ according to Eq. (\ref{eq:dist_update})\\
            }
            Obtain the picked candidate action $\hat{\pi}^{\tau+1}_{\beta_k}=\arg\max_{\hat{\pi}_{\beta_k} \in \mathcal{\hat{A}}_k \cup \pi_k(\bfs)} Q_k^{\tau}(\bfs,\hat{\pi}_{\beta_k}(\bfs))$\\
            }
            {
            { $\hat{\pi}^{\tau+1}_{\beta_k}=\arg\max_{\hat{\pi}_{\beta_k} \in \hat{\pi}_{\beta_k}^{\tau} \cup \pi_k(\bfs)} Q_k^{\tau}(\bfs,\hat{\pi}_{\beta_k}(\bfs))$}\\
            }
            {}
    \KwOut{$\hat{\pi}^{\tau+1}_{\beta_k}$}
\end{algorithm}

\subsection{Server Aggregation with Q-Ensemble}
We design a server side Q ensemble for aggregation on the server dataset $\mathcal{D}_o$. Each device contributes two critic heads, hence $2K$ in total. Inspired by SAC-N in offline RL \cite{an2021uncertainty}, the server builds a pessimistic target with the pointwise minimum and updates the iterates $\pi_o^t$ and $Q_o^t$ to curb policy pollution and overestimation.
Accordingly, the critic is updated by
\begin{align}
\label{eq:globalQ}
Q_o^{t+1} &~\leftarrow~ \arg \min_{Q}\;
\mathbb{E}_{\bfs, \bfa, \bfs' \sim \mathcal{D}_{o}}
\Big[ \big( Q(\bfs, \bfa) - \widehat{\mathcal{B}}^{\pi_o}_{K} Q(\bfs, \bfa) \big)^2 \Big] \\
&\quad +~ \omega^s \Big( \mathbb{E}_{\bfs \sim \mathcal{D}_o,~ \bfa' \sim \pi_{o}(\cdot \mid \bfs)} Q(\bfs, \bfa')
~-~ \mathbb{E}_{\bfs,\bfa \sim \mathcal{D}_o} Q(\bfs, \bfa) \Big), \nonumber
\end{align}
with empirical Bellman operator
$
\widehat{\mathcal{B}}^{\pi_o}_{K} Q(\bfs, \bfa)
= r(\bfs, \bfa) + \gamma\cdot\mathbb{E}_{\bfa' \sim \pi_{o}(\cdot \mid \bfs')}
\left[ \min_{i=1,\ldots,2K} Q_{i}(\bfs', \bfa') - \beta \log \pi_{o}(\bfa' \mid \bfs') \right],
$
where $\omega^s\ge0$ weights a conservative behavior regularizer and $\beta\ge0$ is the entropy temperature.
For the policy improvement, we maximize the corresponding pessimistic surrogate:
\begin{align}
\label{eq:globalpi}
\pi_o^{t+1} ~\leftarrow~ \arg \max_{\pi}\;
\mathbb{E}_{\bfs \sim \mathcal{D}_o,~ \bfa \sim \pi(\cdot \mid \bfs)}
\Big[ \widehat{\Phi}^{\pi_o}_{K} Q(\bfs, \bfa) \Big],
\end{align}
where
$
\widehat{\Phi}^{\pi_o}_{K} Q(\bfs, \bfa)
= \min_{i=1,\ldots,2K} Q_{i} (\bfs, \bfa) - \beta \log \pi(\bfa \mid \bfs).
$
The server alternates \eqref{eq:globalQ} and \eqref{eq:globalpi} each round and broadcasts the updated $\pi_o^{t+1}$ and $Q_o^{t+1}$ to devices.

\begin{algorithm}[htbp]
    \caption{FORLER}
    \label{alg:forler}
    \LinesNumbered
    \KwIn{
    $\alpha_1$, $\alpha_2$ , $\mathcal{D}_o$, $\{\mathcal{D}_k\}^\mathcal{K}_{k=1}$}
    {Server initializes $\pi^0_o$ and $Q^0_o$ and sends agents\;}
        \For{$t=0$ \KwTo $T$}{
            \For{$k=1$ \KwTo $K$}{
                \tcp{Device side}
                Receive $Q^{t}_o$ and $\pi^{t}_o$ from server\\
                {Set $Q^0_k=Q^{t}_o$ and $\pi^0_k=\pi^{t}_o$}\\
                \For{$\tau=0$ \KwTo $\Gamma$}{
                    {Sample a random minibatch of $\bfs$ samples $(\bfs, \bfa, r, \bfs')$ from $\mathcal{B}_k$\\}
                    {Apply Algorithm \ref{alg:ar} with critic $Q^{\tau}_k$, $\bfs$, $\hat{\pi}_{\beta_k}^{\tau}$, and $\tau$ to obtain $\hat{\pi}_{\beta_k}^{\tau+1}$}\\
                    Calculate critic $Q^{\tau+1}_k$ according to Eq.~\eqref{eq:localQ}\\
                    Calculate policy $\pi^{\tau+1}_k$ according to Eq.~\eqref{eq:localpi}\\
                }
                device $k$ sends updated $Q^{\Gamma}_k$ to the server
            }
            \tcp{Server side}
            {Receive all local $\{Q^{\Gamma}_k\}^{\mathcal{K}}_{k=1}$}\\
            Calculate critic $Q_o^{t+1}$ according to Eq.~\eqref{eq:globalQ}\\
            Calculate policy $\pi_o^{t+1}$ according to Eq.~\eqref{eq:globalpi}\\
            Distribute $Q_o^{t+1}$ and $\pi_o^{t+1}$ to all devices\
        }
    \KwOut{Global policy ${\pi_o}$ and local policies $\{\pi_k\}^{\mathcal{K}}_{k=1}$}
\end{algorithm}

\subsection{FORLER}

Outlined in Algorithm \ref{alg:forler}, we design the FORLER algorithm for offline FRL, which follows a structured process integrating both server and device-side computations. 
On each device, at iteration $\tau$, a minibatch of transitions is sampled from the local dataset $\mathcal{D}_k$. The actor-rectification procedure (Algorithm~\ref{alg:ar}) then updates the local policy $\pi_k$ and critic $Q_k$ using the sampled data and the current rectified action $\hat{\pi}_{\beta_k}^{\tau}$. After the local step, each device returns its updated critic $Q_k^{\Gamma}$ to the server. The server aggregates the received critics $\{Q_k^{\Gamma}\}_{k=1}^{\mathcal{K}}$ to obtain the global critic $Q_o^{t+1}$ and policy $\pi_o^{t+1}$ according to the aggregation equations, and broadcasts them to all devices. This process repeats until convergence, yielding the final global policy $\pi_o$ and a set of optimized local policies $\{\pi_k\}_{k=1}^{\mathcal{K}}$.

\section{Theoretical Analysis}
In this section, we provide performance guarantees for FORLER and establish its safe policy improvement property.

To begin, we extend the lower-bound result of the CQL theorem to account for sampling error. Following prior work, we assume that the reward function and the transition dynamics satisfy standard concentration properties \cite{cql,qiao2025fova}. For any $(\bfs,\bfa)\in\mathcal{D}$, the following holds with probability at least $1-\delta$:

\begin{assumption}
\label{assump:concentration}
For all $(\bfs,\bfa)\in\mathcal{D}$, the reward $r$ and the transition kernel $\transitions$ satisfy, with probability $\ge 1-\delta$,
\begin{equation*}
\big|\, r - r(\bfs,\bfa)\,\big| \le \frac{C_{r,\delta}}{\sqrt{|\mathcal{D}(\bfs,\bfa)|}},
~
\big\| \hat{\transitions}(\bfs'|\bfs,\bfa) - \transitions(\bfs'|\bfs,\bfa) \big\|_{1}
\le
\frac{C_{\transitions,\delta}}{\sqrt{|\mathcal{D}(\bfs,\bfa)|}}.
\end{equation*}
\end{assumption}
$|\mathcal{D}(\bfs,\bfa)|$ denote the number of occurrences of $(\bfs,\bfa)$ in $\mathcal{D}$, let $\hat{\transitions}$ be the empirical transition kernel, and let $\|\cdot\|_{1}$ be the $L_{1}$ distance. Assume $|r(\bfs,\bfa)|\le R_{\max}$, which implies $\|\hat{Q}^k\|_{\infty}\le R_{\max}/(1-\gamma)$.
Using Assumption~\ref{assump:concentration}, we can bound the discrepancy between the empirical and true Bellman operators for any policy $\pi$:
\begin{align}
&~~\Big| \big(\hat{\bellman}^{\pi}\hat{Q}^{k}\big) - \big(\bellman^{\pi}\hat{Q}^{k}\big) \Big|\nonumber\\
&\le \big|\, r - r(\bfs,\bfa)\,\big|
+ \gamma \Big| \sum_{\bfs'} \big(\hat{\transitions}(\bfs'|\bfs,\bfa)-\transitions(\bfs'|\bfs,\bfa)\big)\,
\mathbb{E}_{\bfa'\sim\pi}\left[\hat{Q}^{k}(\bfs',\bfa')\right] \Big|
\nonumber\\
&\le \frac{C_{r,\delta} + \gamma\,C_{\transitions,\delta}\,\frac{2R_{\max}}{1-\gamma}}{\sqrt{|\mathcal{D}(\bfs,\bfa)|}}
= \frac{C_{r,\transitions,\delta}}{\sqrt{|\mathcal{D}(\bfs,\bfa)|}} .
\end{align}
This bound quantifies the potential overestimation induced by sampling error. The constant $C_{r,\transitions,\delta}$ is a function of $C_{r,\delta}$ and $C_{\transitions,\delta}$, and it inherits a $\sqrt{\log(1/\delta)}$ dependence. The analysis follows prior work that measures sampling effects via the empirical Bellman operator.
Based on these assumptions, we obtain the following safe policy improvement theorem.

\begin{theorem}[Safe policy improvement]
\label{thm:safe-pi}
Let $\pi_k^{*}$ be the policy obtained by optimizing Eq.~\eqref{eq:localpi}. Define
\[
D\left(\pi_k,\hat{\pi}_{\beta_k}\right)(\bfs) \triangleq
\frac{1-\hat{\pi}_{\beta_k}(\pi_k(\bfs)\mid \bfs)}{\hat{\pi}_{\beta_k}(\pi_k(\bfs)\mid \bfs)},
\]
and let $d^{\pi_k}(\bfs)$ denote the marginal discounted state-visitation distribution of policy $\pi_k$. Then, for $\alpha>0$ and $\eta\in(0,1)$,
\begin{align}
&J(\pi_k^{*}) - J(\hat{\pi}_{\beta_k})
\ge
\frac{\alpha}{1-\gamma}\,\mathbb{E}_{\bfs\sim d^{\pi_k^{*}}}\left[ D\left(\pi_k^{*},\hat{\pi}_{\beta_k}\right)(\bfs)\right]\nonumber\\
+&
\frac{\eta}{1-\eta}\,\mathbb{E}_{\bfs\sim d^{\pi_k^{*}}}\left[\big(\pi_k^{*}(\bfs)-\hat{\pi}_{\beta_k}(\bfs)\big)^{2}\right]\nonumber\\
-&
\frac{\eta}{1-\eta}\,\mathbb{E}_{\substack{\bfs\sim d^{\hat{\pi}_{\beta_k}}\\ \bfa\sim \hat{\pi}_{\beta_k}(\cdot\mid \bfs)}}\left[\big(a_k-\hat{\pi}_{\beta_k}(\bfs)\big)^{2}\right].
\end{align}
\end{theorem}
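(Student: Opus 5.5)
The plan is to combine the CQL-style safe policy improvement argument with the TD3BC/FEDORA-style analysis of a squared-distance regularizer. The rectified action $\hat{\pi}_{\beta_k}$ plays the role that the behavior policy plays in those works. We will treat $\hat{\pi}_{\beta_k}$ as the baseline policy throughout, and write the objective in Eq.~\eqref{eq:localpi} as a regularized return maximization.

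\textbf{Step 1 (conservative lower bound on the critic).} First we show that the local critic from Eq.~\eqref{eq:localQ} lower-bounds the true value up to the sampling error. Take the fixed point of the CQL-regularized empirical backup and apply the bound on $|\hat{\bellman}^{\pi}\hat{Q}^k-\bellman^{\pi}\hat{Q}^k|$ derived from Assumption~\ref{assump:concentration}. Iterating through $(I-\gamma P^{\pi})^{-1}$ yields
\[
\hat{V}^{\pi_k}(\bfs)\ \ge\ V^{\pi_k}(\bfs)-\omega_c\big[(I-\gamma P^{\pi_k})^{-1}D(\pi_k,\hat{\pi}_{\beta_k})\big](\bfs)-\big[(I-\gamma P^{\pi_k})^{-1}\tfrac{C_{r,\transitions,\delta}}{\sqrt{|\mathcal{D}|}}\big](\bfs).
\]
The penalty term reduces to the ratio $D(\pi_k,\hat{\pi}_{\beta_k})$ because the CQL minimum is attained at $\hat{Q}=Q-\omega_c(\mu-\hat{\pi}_{\beta_k})/\hat{\pi}_{\beta_k}$ with $\mu=\pi_k$ deterministic. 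Averaging over $\mu_0$ converts $(I-\gamma P)^{-1}$ into $\frac{1}{1-\gamma}\mathbb{E}_{d^{\pi_k}}$, which produces the $\frac{\alpha}{1-\gamma}\mathbb{E}_{d^{\pi_k^*}}[D]$ term with $\alpha$ absorbing $\omega_c$. The sampling term is absorbed or dropped, as in the CQL safe-improvement theorem.

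\textbf{Step 2 (optimality of $\pi_k^*$).} Since $\pi_k^*$ maximizes $\hat J(\pi)-\alpha_1\mathbb{E}[(\pi-\hat{\pi}_{\beta_k})^2]-\alpha_2\mathbb{E}[(\pi-\pi_0)^2]$, comparing it against the feasible choice $\pi=\hat{\pi}_{\beta_k}$ gives
\[
\hat J(\pi_k^*)-\hat J(\hat{\pi}_{\beta_k})\ \ge\ \alpha_1\mathbb{E}[(\pi_k^*-\hat{\pi}_{\beta_k})^2]-\alpha_1\mathbb{E}[(a-\hat{\pi}_{\beta_k})^2]+(\alpha_2\text{-terms}).
\]
Here the zero self-distance is replaced by the stochastic spread of $\hat{\pi}_{\beta_k}$, i.e.\ $\mathbb{E}_{a\sim\hat{\pi}_{\beta_k}}[(a-\hat{\pi}_{\beta_k}(\bfs))^2]$ under $d^{\hat{\pi}_{\beta_k}}$. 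To handle the $\alpha_2$ anchor, we bound it using the triangle inequality, or we take $\pi_0$ close to the rectified baseline and fold it into the constants. Reparametrizing $\alpha_1=\eta/(1-\eta)$, i.e.\ normalizing the objective as $(1-\eta)\hat J-\eta\,\mathrm{reg}$, produces the $\frac{\eta}{1-\eta}$ coefficients.

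\textbf{Step 3 (combine).} We convert $\hat J$ to $J$ on both sides. For $\pi_k^*$ we use Step 1. For $\hat{\pi}_{\beta_k}$ the penalty vanishes because $D(\hat{\pi}_{\beta_k},\hat{\pi}_{\beta_k})$ is minimal and is in-support, and the conservative critic does not underestimate it beyond the sampling error. The gap between the CQL penalties of the two policies then gives the positive $D$ term in the stated inequality, since CQL's penalty appears with the favorable sign once the difference is taken relative to the baseline. Adding the regularizer inequality of Step 2 gives the result.

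\textbf{Main obstacle.} The main obstacle is getting the signs and state distributions to match the statement. The regularizer expectations are naturally under the dataset states $\mathcal{D}_k$, whereas the statement uses $d^{\pi_k^*}$ and $d^{\hat{\pi}_{\beta_k}}$. I would first try the standard assumption that $\mathcal{D}_k$ states are distributed as the baseline's visitation, together with a performance-difference-lemma argument that rewrites $J(\pi_k^*)-J(\hat{\pi}_{\beta_k})$ as an expectation under $d^{\pi_k^*}$. I would also address the fact that $\hat{\pi}_{\beta_k}$ changes every $\delta$ steps. My plan is to condition on a fixed rectified action within each $\delta$-window, and to note that the $\arg\max$ over $\hat{\pi}_{\beta_k}^{\tau}\cup\pi_k(\bfs)$ is monotone in $Q_k^\tau$, so the baseline's value never decreases between refreshes.
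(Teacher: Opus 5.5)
Your skeleton matches the paper's proof. The CQL fixed point with the Dirac policy at $\pi_k(\bfs)$ gives $\hat{J}(\pi)=J(\pi)-\frac{\alpha}{1-\gamma}\mathbb{E}_{d^{\pi}}[D(\pi,\hat{\pi}_{\beta_k})]$. The local objective is read as $(1-\eta)\hat{J}(\pi)-\eta\,\mathbb{E}_{d^{\pi}}[(\pi(\bfs)-\hat{\pi}_{\beta_k}(\bfs))^2]$. The maximizer is compared with $\hat{\pi}_{\beta_k}$, whose penalty vanishes and whose regularizer becomes the spread $\mathbb{E}[(a_k-\hat{\pi}_{\beta_k}(\bfs))^2]$. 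One then rearranges and divides by $1-\eta$.

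The problems are in the machinery you add around this skeleton. First, the inequality displayed in Step~1 is on the wrong side for its use on $\pi_k^*$. Averaged over $\mu_0$, it says $J(\pi_k^*)\le\hat{J}(\pi_k^*)+\frac{\alpha}{1-\gamma}\mathbb{E}[D]+S$, with $S$ the propagated sampling term. That is an upper bound on the true return, whereas the positive $D$-term requires $J(\pi_k^*)\ge\hat{J}(\pi_k^*)+\frac{\alpha}{1-\gamma}\mathbb{E}[D]-S$. Both sides follow from the fixed point, but once you use the correct ones, a $-S$ term survives for each of the two policies. It cannot be ``dropped as in CQL'': CQL's safe-improvement theorem carries it in $\zeta$, and the stated bound has no room for it. The paper sidesteps this by working entirely in the empirical MDP $\hat{M}_k$ (its visitation distributions are $d^{\pi}_{\hat{M}_k}$). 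There the fixed-point relation is an exact identity and Assumption~\ref{assump:concentration} plays no role. With $J$ the true return, your route gives the statement only up to extra sampling-error terms. Also, the fixed point you quote has $\hat{\pi}_{\beta_k}$ in the denominator. That requires replacing the data term of Eq.~\eqref{eq:localQ} by an expectation under $\hat{\pi}_{\beta_k}$, which the paper does explicitly in its rewritten critic objective.

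Second, the $\alpha_2$ anchor cannot be removed with the triangle inequality. Comparing with $\hat{\pi}_{\beta_k}$ leaves $\alpha_2\big(\mathbb{E}[(\pi_k^*-\pi_0)^2]-\mathbb{E}[(\hat{\pi}_{\beta_k}-\pi_0)^2]\big)$ on the right. Discarding the nonnegative first part leaves $-\alpha_2\mathbb{E}[(\hat{\pi}_{\beta_k}-\pi_0)^2]$, which has no counterpart in the theorem. The paper's proof writes the local objective without the anchor, effectively $\alpha_2=0$ and $\alpha_1=\eta/(1-\eta)$, and you need the same reduction.

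Third, a performance-difference argument cannot move the regularizer from dataset states to $d^{\pi_k^*}$. The term $\frac{\eta}{1-\eta}\mathbb{E}_{d^{\pi_k^*}}[(\pi_k^*-\hat{\pi}_{\beta_k})^2]$ enters the lower bound as a gain. Deriving it from a dataset-state version would need $\mathbb{E}_{\bfs\sim\mathcal{D}_k}[f]\ge\mathbb{E}_{\bfs\sim d^{\pi_k^*}}[f]$ for a nonnegative $f$, which fails in general. The paper simply defines the regularized objective with each policy's own visitation distribution, and your proof must adopt that definition too. Finally, the $\delta$-window monotonicity discussion is unnecessary, since the theorem treats $\hat{\pi}_{\beta_k}$ as a fixed baseline.
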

The inequality shows that under the stated conditions FEDORA achieves safe improvement over the behavior policy. The first term captures the benefit from deviating from the behavior policy, and the last two terms correspond to a quadratic penalty for the target policy and a baseline term under the behavior policy.
The complete proof is following:



\begin{figure*}[t]
\centering
\setlength{\abovecaptionskip}{0pt}
\setlength{\belowcaptionskip}{0pt}

\begin{minipage}[t]{0.64\textwidth}
  \vspace{-7pt}
  \centering
  \includegraphics[width=0.98\linewidth]{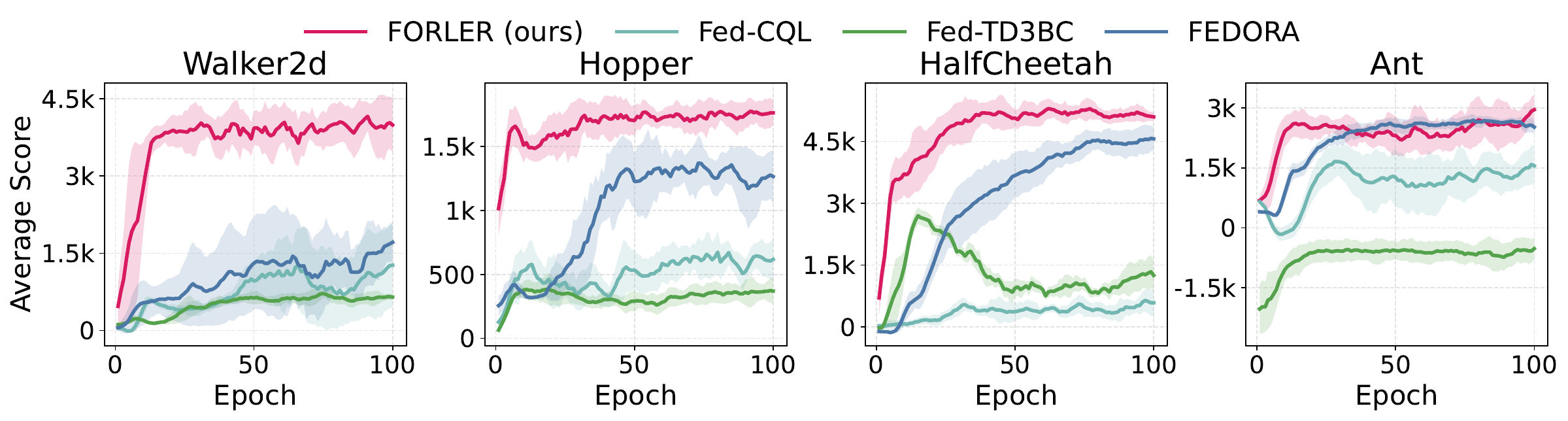}
  \caption{The performance of FORLER against baselines.}
  \label{fig:compare}
\end{minipage}
\begin{minipage}[t]{0.34\textwidth}
  \vspace{0pt}\centering
  \subfigure[\scriptsize Performance on devices]{
    \includegraphics[width=0.46\linewidth]{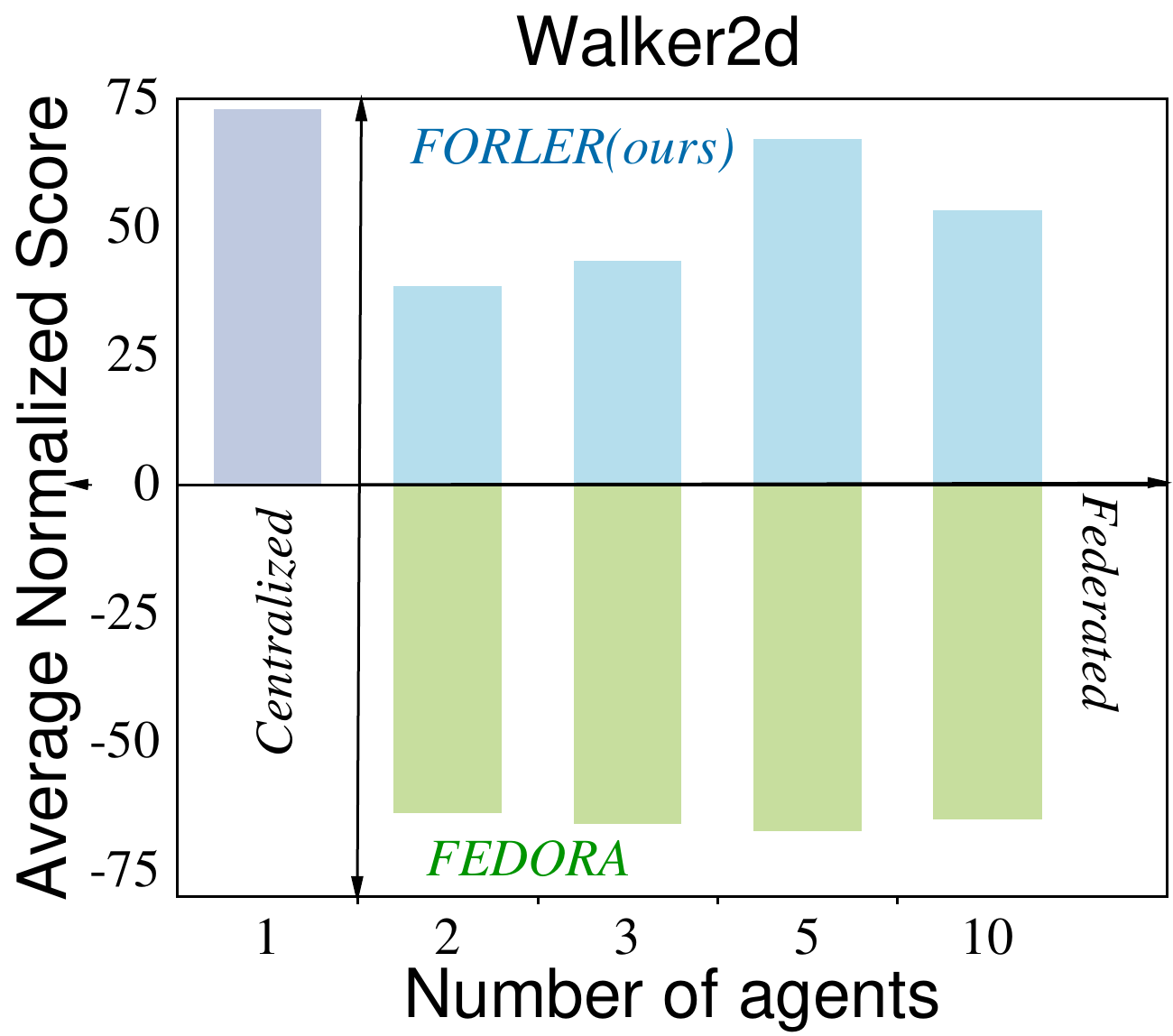}
    \label{fig:comparevsN}
  }\hfill
  \subfigure[\scriptsize Impact of devices count.]{
    \includegraphics[width=0.46\linewidth]{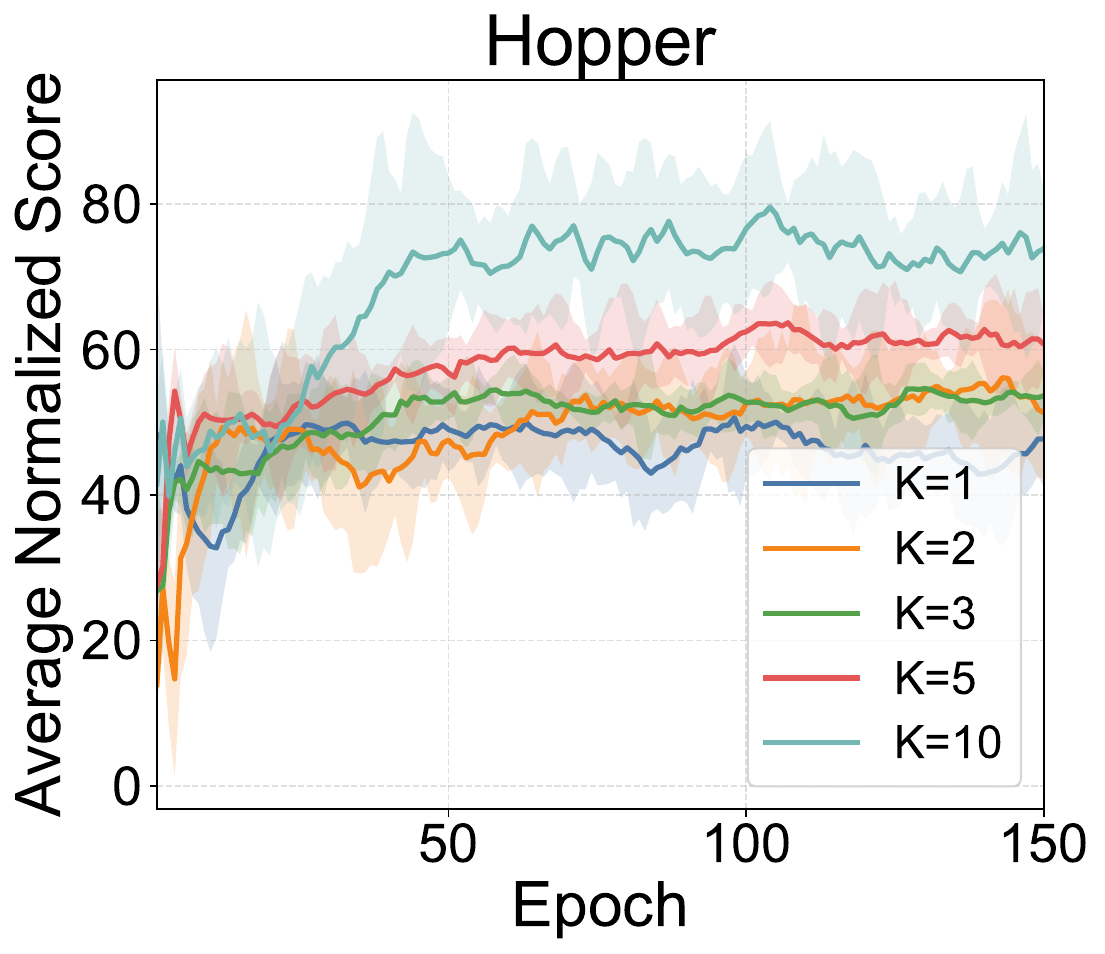}
    \label{fig:comparevsEpoch}
  }
  \vspace{-16pt}
\end{minipage}

\vspace{8pt}

\begin{minipage}[t]{0.65\textwidth}
  \vspace{0pt}\centering
  \includegraphics[width=0.98\linewidth]{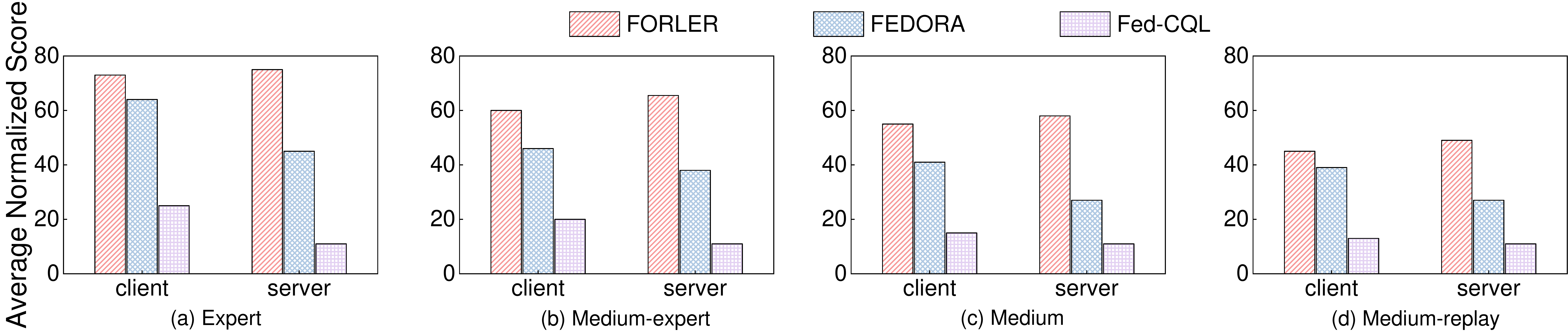}
  \caption{Performance comparison of different methods on policy pollution problem.}
  \label{fig:pl}
\end{minipage}
\begin{minipage}[t]{0.34\textwidth}
  \vspace{-0pt}\centering
  \subfigure[\scriptsize The impact of data size]{
    \includegraphics[width=0.46\linewidth]{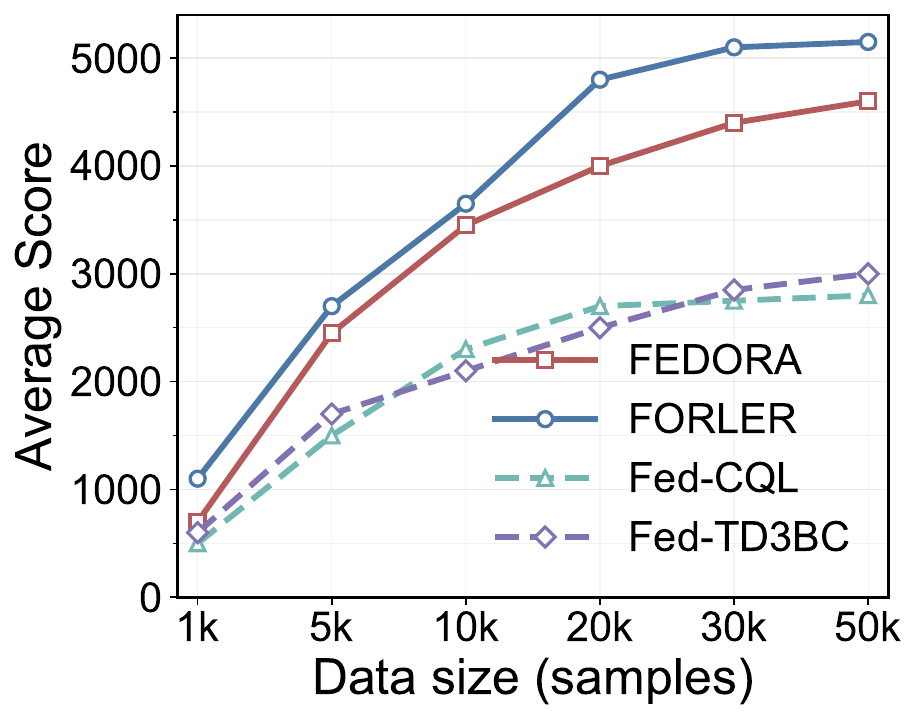}
    \label{fig:datasize}
  }\hfill
  \subfigure[\scriptsize The impact of parameters]{
    \includegraphics[width=0.46\linewidth]{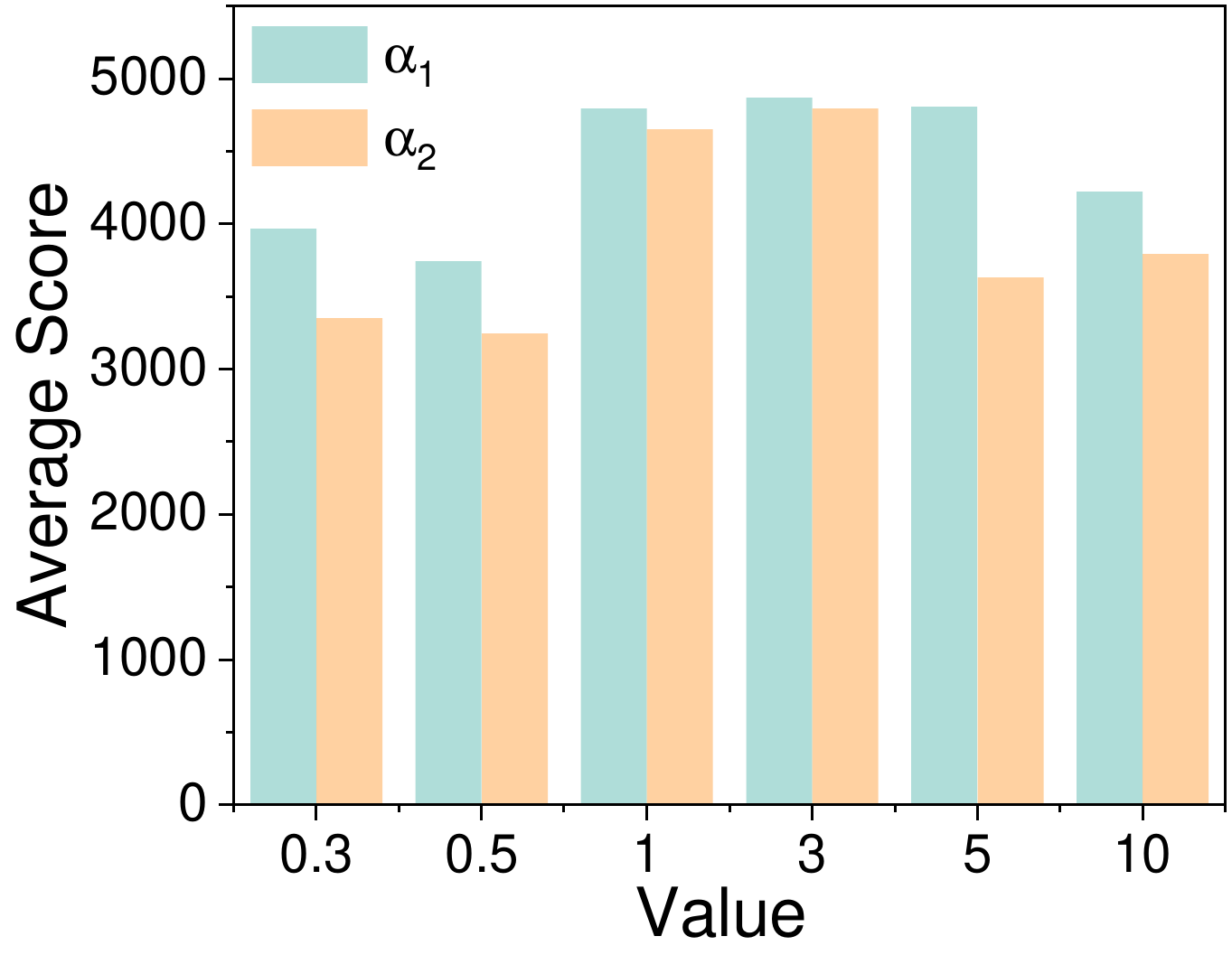}
    \label{fig:parameter}
  }

  \caption{Effect of devices, data, and hyperparameters.}
  \label{fig:hyperparameters}
\end{minipage}

\end{figure*}

\begin{proof}
For FORLER, we have the following iterative update:
\begin{align}
\label{cqlnew}
&{\arg\min}_{Q_k} 
\frac{1}{2} \mathbb{E}_{{\bfs}, {\bfa}, {\bfs}^{\prime} \sim \mathcal{D}}\left[\left(Q_k(\bfs, \bfa)-\hat{\mathcal{B}}^{\pi_k} \hat{Q}_{k}^{t}(\bfs, \bfa)\right)^{2}\right] \\
&+ \alpha \mathbb{E}_{{\bfs} \sim \mathcal{D}_k} \left[ \mathbb{E}_{{\bfa}\sim {\tilde{\pi}}_{k}} \left[Q_k({\bfs}, {\bfa})\right] - \mathbb{E}_{{\bfa} \sim \hat{\pi}_{\beta_k}} \left[Q_k({\bfs}, {\bfa})\right] \right]\nonumber
,
\end{align}
where $\tilde{\pi}_{k}({\bfa}|{\bfs})=1$ if and only if $a_k=\pi_k(\bfs).$
Let $\hat{Q}_k^{t+1}$ be the fixed point of solving Equation (\ref{cqlnew}) by setting the derivative of Eq. (\ref{cqlnew}) with respect to $Q_k$ to be $0$, then we have that
\begin{equation}
\hat{Q}_k^{t+1}(\bf\bfs,\bfa) = \hat{\mathcal{B}}^{\pi_k}\hat{Q}_{k}^{t}(\bf\bfs,\bfa)-\alpha \left(\frac{I_{a_k=\pi_k(\bfs)}}{\hat{\pi}_{\beta_k}(a_k|\bfs)}-1 \right),
\end{equation}
where $I$ is the indicator function.
Denote $D(\pi_k,\hat{\pi}_{\beta_k})(\bfs) = \frac{1}{\hat{\pi}_{\beta_k}(\pi_k(\bfs)|\bfs)}-1$, and we obtain the difference between the value function $\hat{V}_k(\bfs)$ and the original value function as:
$
\hat{V}_k(\bfs)={V}_k(\bfs)-\alpha D(\pi_k,\hat{\pi}_{\beta_k})(\bfs).
$
Then, the policy that minimizes the loss function defined in Eq. (\ref{eq:localpi}) is equivalently obtained by maximizing
\begin{equation}
\begin{split}
(1-\eta) \left(J(\pi_k)-\alpha \frac{1}{1-\gamma} \mathbb{E}_{\bfs\sim d^{\pi_k}_{\hat{M}_k}(\bfs)} \left[D(\pi_k,\hat{\pi}_{\beta_k})(\bfs)\right] \right) \\
- \eta \mathbb{E}_{\bfs\sim d^{\pi_k}_{\hat{M}_k}(\bfs)} \left[ (\pi_k(\bfs)-\hat{\pi}_{\beta_k})^2 \right].
\end{split}
\end{equation}
Therefore, we obtain that
\begin{equation}
\begin{split}
&(1-\eta) \left(J(\pi_k^*)-\alpha \frac{1}{1-\gamma} \mathbb{E}_{\bfs\sim d^{\pi_k^*}_{\hat{M}_k}(\bfs)} \left[D(\pi_k^*,\hat{\pi}_{\beta_k})(\bfs) \right] \right)\\
-&\eta \mathbb{E}_{\bfs\sim d^{\pi_k^*}_{\hat{M}_k}(\bfs)} \left[(\pi_k^*(\bfs)-\hat{\pi}_{\beta_k})^2 \right]\\
\geq & (1-\eta) J(\hat{\pi}_{\beta_k})
-\eta \mathbb{E}_{\bfs\sim d^{\hat{\pi}_{\beta_k}}_{\hat{M}_k}(\bfs),{\bfa} \sim \hat{\pi}_{\beta_k}({\bfa} \mid {\bfs})} \left[ (a_k-\hat{\pi}_{\beta_k})^2 \right].
\end{split}
\label{eq:thm1_final}
\end{equation}
Then, from Eq. (\ref{eq:thm1_final}) we obtain the result.
\end{proof}

\section{Simulation}\label{sec:simulation}
In this section, we evaluate our proposed algorithm by answering the following questions:
(1) How does the overall performance of our method compare to baselines?
(2) What is the effect of the number of devices, the size of the dataset, and critical hyperparameters?
(3) Is Algorithm FORLER effective in addressing the problem of policy pollution?
(4) What is the impact of the $\delta$-periodic strategy?

\subsection{Experimental Setup}\label{subsec:Exp-Setup}
We evaluate the benefits of our method (FORLER) using datasets from the widely recognized offline RL benchmark D4RL, which is built upon the MuJoCo simulation environment \cite{fu2020d4rl}. This benchmark encompasses 4 complex continuous control tasks (HalfCheetah, Walker2d, Ant, and Hopper).
Additionally, we compare FORLER with several baselines:
\begin{itemize}
\item 
\textit{Conservative Q-Learning (CQL) }, a wildly-studied offline RL algorithm, is executed in a centralized way \cite{kumar2020conservative}.
We re-run CQL using the code of OfflineRL-Kit 
\cite{offinerlkit}.
Centralized CQL is trained on the pooled union of all device datasets and serves as an upper bound reference.
\item 
\textit{Federated Conservative Q-Learning (Fed-CQL)}.
It works by changing the CQL rules locally on the devices with a set number of gradient steps and then combining the parameters in the FedAvg way at the server.
\item 
\textit{Federated TD3 with Behavior Cloning (Fed-TD3BC)}, a combination of FedAvg and TD3BC\cite{fujimoto2021minimalist}, operates by performing TD3BC updates locally on the devices with a certain number of gradient steps and subsequently aggregating the policy parameters at the server.
\item 
\textit{FEDORA} is the advanced offline FRL algorithm \cite{rengarajan2023federated}.
We reproduce FEDORA following their paper and the open-source implementations.
\end{itemize}

We implement FORLER in PyTorch 2.1.2 and run all experiments on Ubuntu 20.04.4 LTS with four NVIDIA GeForce RTX 3090 GPUs. 
Unless noted otherwise, all metrics are averaged over 3 independent runs with different random seeds.

\subsection{Experimental Evaluation}\label{subsec:Exp-Evaluation}

As shown in Fig.~\ref{fig:compare}, we compare FORLER with Fed-TD3BC, Fed-CQL, and FEDORA on D4RL MuJoCo tasks—Walker2d, Hopper, HalfCheetah, and Ant. FORLER achieves higher scores and faster convergence across all four tasks, consistently outperforming the baselines. The average-score curves indicate rapid, stable improvement and strong final policies, supporting FORLER as a robust and efficient solution for offline federated RL.


To assess the effects of device count, dataset size, and key hyperparameters on FORLER, we ran ablations summarized in Figs. \ref{fig:hyperparameters}.
In Fig. \ref{fig:comparevsN}, FORLER consistently outperforms FEDORA across agent counts, remaining robust as devices increase. In Fig. \ref{fig:comparevsEpoch}, FORLER maintains a steady advantage, while FEDORA varies more with the number of samples $N$. Furthermore, Fig. \ref{fig:datasize} shows FORLER scales well, achieving higher average scores and exploiting larger datasets more effectively. As shown in Fig. \ref{fig:parameter}, hyperparameter sweeps over $\alpha_1$ and $\alpha_2$ yield stable performance, indicating low sensitivity and minimal tuning needs. Overall, FORLER delivers superior, scalable performance with reduced dependence on device number, data volume, and hyperparameters.

To investigate whether FORLER mitigates policy pollution, we follow Fig.~\ref{fig:pl} and form a mixed-quality setup with six devices: four high-quality devices using expert, medium-expert, medium, and medium-replay data, and two low-quality devices using random data. We examine whether injecting random data degrades either high-quality devices or server-side aggregation. Across all four high-quality regimes, FORLER remains robust on both device and server sides, outperforming FEDORA and FedCQL and showing little to no degradation. High-quality devices retain strong scores despite random peers, and the aggregated policy is only minimally affected. Overall, FORLER effectively prevents policy pollution, preserving high-quality policy and fitting federated settings with uneven data quality.

\begin{figure}[ht!]
	\centering
	\subfigure[Impact of value $\delta$]{\label{fig:ARDelta}\includegraphics[width=0.21\textwidth]{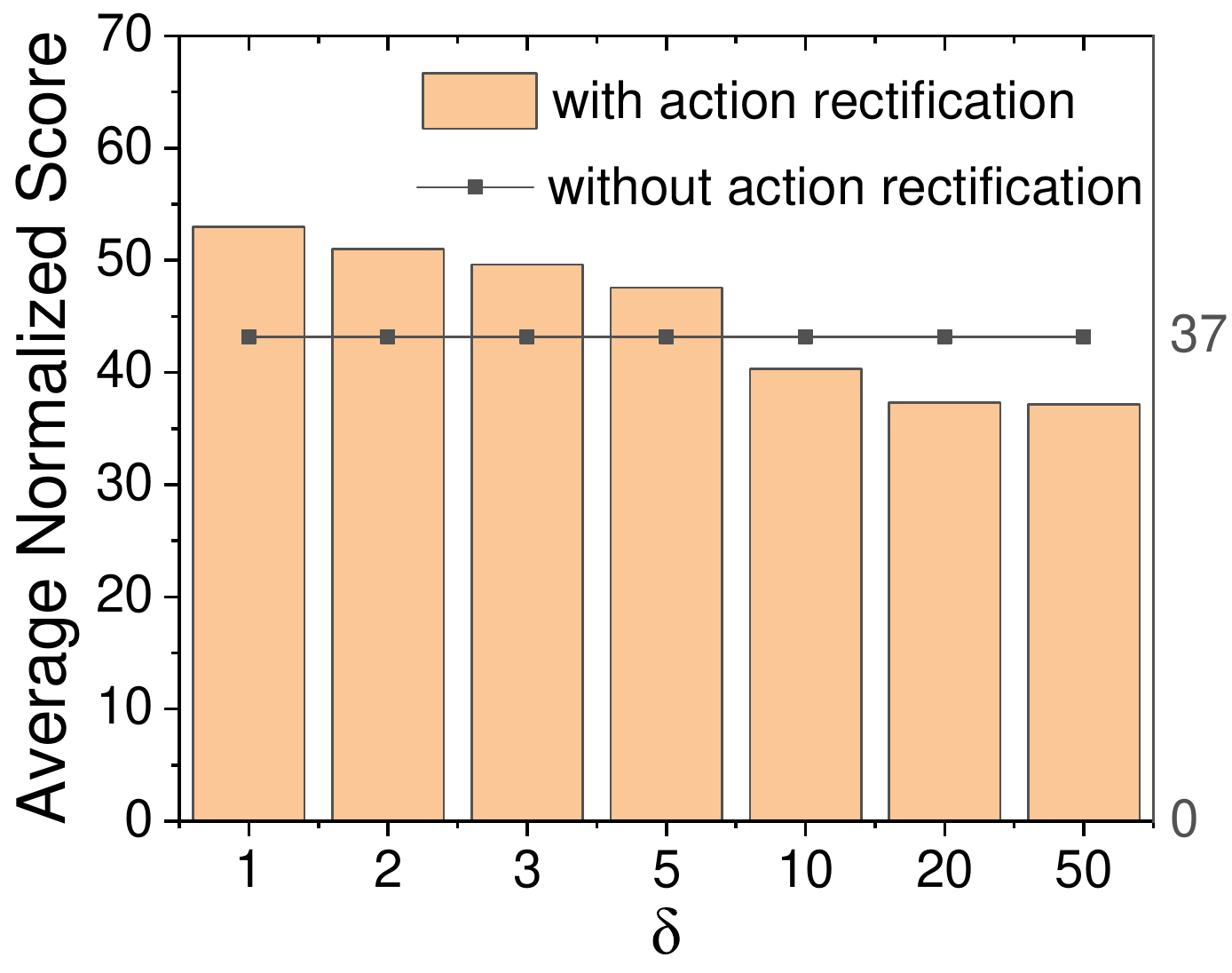}}
	\subfigure[Ablation study]{\label{fig:Ablation}\includegraphics[width=0.24\textwidth]{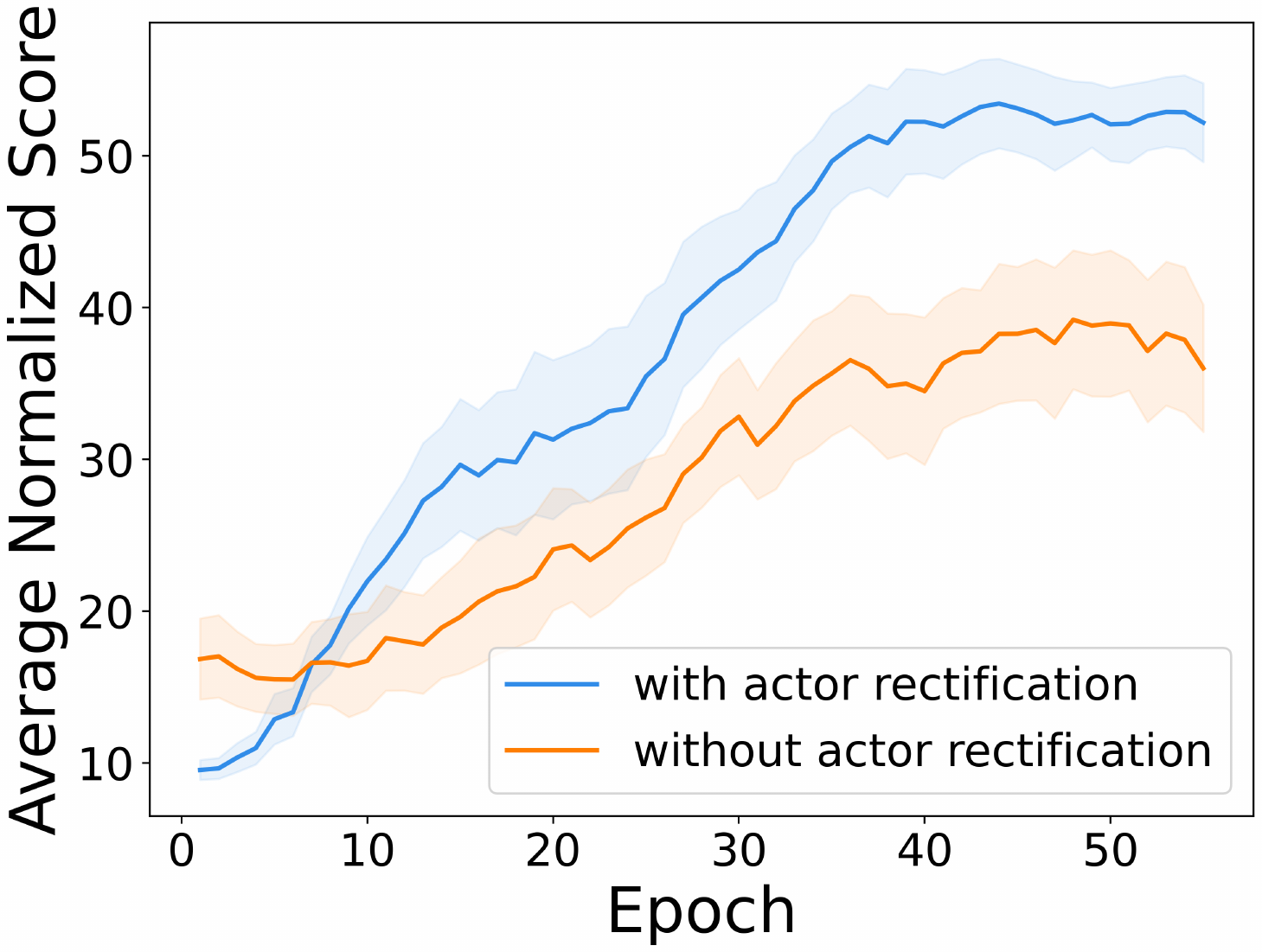}}
	\caption{Actor rectification improves performance: $\delta$ sensitivity and ablation.}
    \label{fig:ablation}
    \vspace{-1pt}
\end{figure}


In an ablation study depicted in Figure \ref{fig:ablation}, we examine the efficacy of actor rectification and the impact of the \(\delta\)-periodic strategy on the performance of our FORLER algorithm. 
As shown in Fig. \ref{fig:ARDelta}, we observe that implementing actor rectification consistently yields higher scores across all \(\delta\) values, underscoring its positive effect on the learning process. Moreover, the performance remains relatively stable across various \(\delta\) settings, suggesting that while the actor rectification contributes to performance gains, the specific choice of \(\delta\) within a certain range does not significantly alter the outcomes.
Fig. \ref{fig:Ablation} further reinforces the benefits of actor rectification, where the learning curve with rectification outpaces that without, indicating not only improved performance but also an accelerated rate of learning when actor rectification is applied.


\section{Conclusion}\label{sec:conclusion}
We introduce FORLER, an offline FRL method for IoT that couples server-side Q-ensemble aggregation with 
$\delta$-periodic actor rectification to improve robustness and local efficiency under data heterogeneity. Experiments on D4RL/MuJoCo confirm clear performance advantages over strong baselines. While our results are simulation-based, we include an explicit overhead analysis of the 
$\delta$-periodic strategy and leave real-world testbed validation and communication-aware optimizations for future work.

\section{ACKNOWLEDGMENT}
This research was supported in part by the National Natural Science Foundation of China under Grant 62572496, the Shenzhen Science and Technology Program under Grant JCYJ20250604175500001, and the Young Elite Scientist Sponsorship Program by CAST under Contract ZB2025-218.

\bibliographystyle{IEEEtran}
\bibliography{reference}

\begin{thebibliography}{10}
\providecommand{\url}[1]{#1}
\csname url@samestyle\endcsname
\providecommand{\newblock}{\relax}
\providecommand{\bibinfo}[2]{#2}
\providecommand{\BIBentrySTDinterwordspacing}{\spaceskip=0pt\relax}
\providecommand{\BIBentryALTinterwordstretchfactor}{4}
\providecommand{\BIBentryALTinterwordspacing}{\spaceskip=\fontdimen2\font plus
\BIBentryALTinterwordstretchfactor\fontdimen3\font minus
  \fontdimen4\font\relax}
\providecommand{\BIBforeignlanguage}[2]{{%
\expandafter\ifx\csname l@#1\endcsname\relax
\typeout{** WARNING: IEEEtran.bst: No hyphenation pattern has been}%
\typeout{** loaded for the language `#1'. Using the pattern for}%
\typeout{** the default language instead.}%
\else
\language=\csname l@#1\endcsname
\fi
#2}}
\providecommand{\BIBdecl}{\relax}
\BIBdecl

\bibitem{zhang2025carplanner}
D.~Zhang, J.~Liang, K.~Guo, S.~Lu, Q.~Wang, R.~Xiong, Z.~Miao, and Y.~Wang,
  ``Carplanner: Consistent auto-regressive trajectory planning for large-scale
  reinforcement learning in autonomous driving,'' in \emph{Proceedings of the
  Computer Vision and Pattern Recognition Conference}, 2025.

\bibitem{qiao2023popec}
N.~Qiao, S.~Yue, Y.~Zhang, and J.~Ren, ``Popec: Paoi-centric task offloading
  with priority over unreliable channels,'' \emph{IEEE/ACM Transactions on
  Networking}, vol.~32, no.~3, pp. 2376--2390, 2024.

\bibitem{Wang2025Squeezer}
X.~Wang, L.~Ma, Z.~Fu, X.~Li, Y.~Li, J.~Ren, Y.~Zhang, and Y.~Liu, ``Squeezer:
  Efficient multi-dnn inference for edge video analytics via cross-model
  scheduling,'' \emph{IEEE Transactions on Mobile Computing}, vol.~24, no.~12,
  pp. 13\,309--13\,321, 2025.

\bibitem{Zhang2023Distributed}
Y.~Zhang, C.~Ji, N.~Qiao, J.~Ren, Y.~Zhang, and Y.~Yang, ``Distributed pricing
  and bandwidth allocation in crowdsourced wireless community networks,''
  \emph{IEEE Transactions on Mobile Computing}, vol.~22, no.~9, pp. 5170--5183,
  2023.

\bibitem{wang2023uav}
L.~Wang, X.~Wu, Y.~Wang, Z.~Xiao, L.~Li, and A.~Fei, ``On uav serving node
  deployment for temporary coverage in forest environment: A hierarchical deep
  reinforcement learning approach,'' \emph{Chinese Journal of Electronics},
  vol.~32, no.~4, pp. 760--772, 2023.

\bibitem{seid2021multi}
A.~M. Seid, G.~O. Boateng, B.~Mareri, G.~Sun, and W.~Jiang, ``Multi-agent drl
  for task offloading and resource allocation in multi-uav enabled iot edge
  network,'' \emph{IEEE Transactions on Network and Service Management},
  vol.~18, no.~4, pp. 4531--4547, 2021.

\bibitem{qi2021federated}
J.~Qi, Q.~Zhou, L.~Lei, and K.~Zheng, ``Federated reinforcement learning:
  Techniques, applications, and open challenges,'' \emph{arXiv preprint
  arXiv:2108.11887}, 2021.

\bibitem{jin2022federated}
H.~Jin, Y.~Peng, W.~Yang, S.~Wang, and Z.~Zhang, ``Federated reinforcement
  learning with environment heterogeneity,'' in \emph{International Conference
  on Artificial Intelligence and Statistics}.\hskip 1em plus 0.5em minus
  0.4em\relax PMLR, 2022.

\bibitem{fang2025provably}
M.~Fang, X.~Wang, and N.~Z. Gong, ``Provably robust federated reinforcement
  learning,'' in \emph{Proceedings of the ACM on Web Conference 2025}, 2025,
  pp. 896--909.

\bibitem{levine2020offline}
S.~Levine, A.~Kumar, G.~Tucker, and J.~Fu, ``Offline reinforcement learning:
  Tutorial, review, and perspectives on open problems,'' \emph{arXiv preprint
  arXiv:2005.01643}, 2020.

\bibitem{qiao2025fova}
N.~Qiao, S.~Yue, J.~Ren, and Y.~Zhang, ``Fova: Offline federated reinforcement
  learning with mixed-quality data,'' \emph{IEEE Transactions on Networking},
  vol.~34, pp. 2031--2046, 2026.

\bibitem{rengarajan2023federated}
D.~Rengarajan, N.~Ragothaman, D.~Kalathil, and S.~Shakkottai, ``Federated
  ensemble-directed offline reinforcement learning,'' \emph{Advances in Neural
  Information Processing Systems}, vol.~37, pp. 6154--6179, 2024.

\bibitem{fujimoto2021minimalist}
S.~Fujimoto and S.~S. Gu, ``A minimalist approach to offline reinforcement
  learning,'' \emph{Advances in neural information processing systems},
  vol.~34, pp. 20\,132--20\,145, 2021.

\bibitem{an2021uncertainty}
G.~An, S.~Moon, J.-H. Kim, and H.~O. Song, ``Uncertainty-based offline
  reinforcement learning with diversified q-ensemble,'' \emph{Advances in
  neural information processing systems}, vol.~34, pp. 7436--7447, 2021.

\bibitem{qiao2026less}
\BIBentryALTinterwordspacing
Anonymous, ``Less is more: Clustered cross-covariance control for offline
  {RL},'' in \emph{The Fourteenth International Conference on Learning
  Representations}, 2026. [Online]. Available:
  \url{https://openreview.net/forum?id=drOy5wi6Qq}
\BIBentrySTDinterwordspacing

\bibitem{pan2022plan}
L.~Pan, L.~Huang, T.~Ma, and H.~Xu, ``Plan better amid conservatism: Offline
  multi-agent reinforcement learning with actor rectification,'' in
  \emph{International conference on machine learning}.\hskip 1em plus 0.5em
  minus 0.4em\relax PMLR, 2022.

\bibitem{kim2025penalizing}
J.~Kim, Y.~Shin, W.~Jung, S.~Hong, D.~Yoon, Y.~Sung, K.~Lee, and W.~Lim,
  ``Penalizing infeasible actions and reward scaling in reinforcement learning
  with offline data,'' \emph{arXiv preprint arXiv:2507.08761}, 2025.

\bibitem{cql}
A.~Kumar, A.~Zhou, G.~Tucker, and S.~Levine, ``Conservative q-learning for
  offline reinforcement learning,'' \emph{Advances in Neural Information
  Processing Systems}, vol.~33, pp. 1179--1191, 2020.

\bibitem{fu2020d4rl}
J.~Fu, A.~Kumar, O.~Nachum, G.~Tucker, and S.~Levine, ``D4rl: Datasets for deep
  data-driven reinforcement learning,'' \emph{arXiv preprint arXiv:2004.07219},
  2020.

\bibitem{kumar2020conservative}
A.~Kumar, A.~Zhou, G.~Tucker, and S.~Levine, ``Conservative q-learning for
  offline reinforcement learning,'' \emph{Advances in Neural Information
  Processing Systems}, vol.~33, pp. 1179--1191, 2020.

\bibitem{offinerlkit}
Y.~Sun, ``Offlinerl-kit: An elegant pytorch offline reinforcement learning
  library,'' \url{https://github.com/yihaosun1124/OfflineRL-Kit}, 2023.

\end{thebibliography}

\end{document}